\documentclass{article}
\usepackage{iclr2027_conference,times}
\usepackage{amsmath,amsfonts,amssymb,amsthm}
\usepackage{graphicx}
\usepackage{booktabs}
\usepackage{multirow}
\usepackage{xcolor}
\usepackage[colorlinks=true,linkcolor=blue,citecolor=blue,urlcolor=blue]{hyperref}
\usepackage{url}

\newcommand{\rr}{\mathbb{R}}
\newtheorem{proposition}{Proposition}
\newtheorem{definition}{Definition}

\title{When Does Recurrence Become an Algorithm?\\
Convergence Selection in Weight-Tied Looped Transformers}

\author{Tong Zhang \\
Fudan University\\
\texttt{tongzhang25@m.fudan.edu.cn}
\And
Junhao Hu\\
Peking University\\
\texttt{junhaohu@stu.pku.edu.cn}
\And
Yun Peng\\
Fudan University\\
\texttt{yunpeng4@sigsoft.org}
\And
Tao Xie\thanks{Corresponding author: \texttt{taoxie@pku.edu.cn}}\\
Peking University\\
\texttt{taoxie@pku.edu.cn}
}

\iclrfinalcopy
\begin{document}

\maketitle

\begin{abstract}
When does a weight-tied looped transformer---one block applied $T$
times---implement an actual algorithm? We answer with four findings from
controlled populations on group word problems. \textbf{(1) The budget
law:} free training installs a \emph{linear computation frontier}, a
mechanism that solves $v$ positions per loop, and prices its speed by the
training contract: $v \approx n_{tr}/T_{tr}$, exactly unity under
$T{=}n$ training---SGD consistently selects a frontier whose speed
matches the minimum the contract demands. Speed is an
experimenter-controllable knob, and granting more test-time loops than
ever trained rescues late positions at fixed input length. \textbf{(2) Architecture prior, not expressivity,
picks the algorithm:} standard-depth transformers learn parallel scans on
this task family; weight tying flips the selection to the serial
frontier, under global attention, even when positional addressing for a
$\log$-depth scan is supplied. \textbf{(3) The walls are not where
circuit complexity says:} $\mathrm{NC}^1$-completeness costs nothing
($A_5$ generalizes fully), while group order does ($S_5$'s
$120\times120$ operator deadlocks joint learning)---and an operator-first
curriculum dissolves the wall, revealing it as an optimization pathology,
not an impossibility. \textbf{(4) Mechanisms are portable objects:}
warm-starting across budget contracts transfers the algorithm in every
seed attempted---re-pricing its speed to the new contract---and bypasses
a seed lottery that defeats most from-scratch runs,
while imposing seriality through the input schedule fails where free
training succeeds---the mechanism can be moved, not mandated. These
results are invisible to standard instruments: skip and cross-step
similarity metrics provably saturate at the fixed points trained loops
converge to, measuring the trajectory's tail while the algorithm lives in
its head. We introduce a head instrument, the convergence-time scaling
$\tau(n,i)$, validate it causally by activation patching (damage cones
whose slope reproduces $v$), and show head instruments predict which
seeds generalize where tail metrics and path-independence scores do
not. The phenomena replicate on the public
easy-to-hard benchmark, including $4\times$ exact-match length
extrapolation.
\end{abstract}

\section{Introduction}

Depth-recurrent (looped) transformers reuse a single block $T$ times,
promising adaptive test-time compute: harder or longer inputs get more
loops~\citep{dehghani2019universal,geiping2025huginn,ouro2025}. The implicit
mechanistic story is that the loop is an \emph{algorithm}---that loop $t$
performs the $t$-th step of an iterative computation, so that running more
loops executes more steps. This story motivates training recipes, halting
rules, and interpretability agendas alike. Is it true?

The question splits into two: (i) what mechanism does a trained looped
transformer actually implement, and (ii) can our instruments even tell? We
show the second question gates the first. The natural tests---does skipping a
loop hurt? do successive loops use the same circuit?---are \emph{tail
instruments}: they probe the trajectory near its end state. But trained loops
converge, and a converged state is a fixed point of the loop map; near an
idempotent point, skipping is free and successive local circuits agree by
construction (Proposition~\ref{prop:tail}). A model that finishes its real
work in the first few loops and idles thereafter is indistinguishable, to
every tail instrument, from a model that never had discrete steps at all.
The algorithm, if there is one, lives in the head of the trajectory.

We make this concrete with a controlled population study: dozens of
weight-tied looped transformers on tasks with known one-shot
(constant-depth) solutions, spanning recipes (positional encoding, input
injection, loop schedule) and seeds. The population exhibits an
order-of-magnitude spread in length generalization---including a
near-full spread across seeds of a single recipe---yet \emph{every} tail
instrument is blind to it: single-step skip retention is exactly total
for every model with enough solved examples, cross-step Jacobian
alignment is uniformly high, and no cross-step similarity metric
correlates with generalization (Appendix~\ref{app:pilot}).

We then introduce a \emph{head} instrument. The convergence-time scaling
$\tau(n,i)$ is the first loop at which the decoded output at position $i$
reaches its final value and stays there, measured as a function of input
length $n$ on solved inputs. Its scaling class is a mechanism signature:
a constant $\tau$ is a shortcut (or pure refinement); $\tau\propto\log n$ is
a parallel scan; $\tau\propto n$ is serial, step-indexed computation. Unlike
tail metrics, $\tau$ has provable dynamic range: we validate it on a
streaming positive control---token $i$ is revealed only at loop $i$, so
serial computation is forced by construction---where it reads exactly
$\tau\propto n$ (slope $1.0$), while shortcut controls read flat.

Armed with a calibrated instrument, we characterize what free training
actually installs. Group word problems supply a task ladder with known
theory: solvable groups ($\mathbb{Z}_{60}$, $S_4$) admit constant-depth
shortcuts~\citep{liu2023shortcuts}, the $S_5$ and $A_5$ word problems are
$\mathrm{NC}^1$-complete~\citep{barrington1989}, and $\Theta(\log n)$
looped depth suffices in principle via an associative
scan~\citep{merrill2025depth}. Against every option this theory
menu offers---one-shot shortcut, $\log$-depth scan, fixed-point
refinement---trained models choose a fourth: a linear frontier whose
speed is a function of the training contract, whose walls track operator
scale rather than circuit class, and which moves between contracts by
warm-starting but cannot be installed by input scheduling.

\textbf{Contributions.}
(1)~The linear computation frontier and its \emph{budget law}: free
training installs a serial mechanism whose speed obeys
$v\approx n_{tr}/T_{tr}$ (exactly $1.00$ under $T{=}n$), with
per-position loop-extrapolation rescue at fixed length
(\S\ref{sec:boundary}).
(2)~Algorithm selection is set by architecture prior: weight tying flips
SGD's choice from the parallel scan of standard-depth transformers to the
serial frontier, with untied and positional-encoding controls
(\S\ref{sec:boundary}).
(3)~The learning walls are optimization objects, not complexity-theoretic
ones: $\mathrm{NC}^1$-hardness is free ($A_5$), operator scale is not
($S_5$), and an operator-first curriculum dissolves the wall; imposed
serial schedules fail where free training succeeds (\S\ref{sec:boundary}).
(4)~Mechanism portability: budget annealing transfers the algorithm
across loop contracts in $4/4$ seeds, bypassing the from-scratch seed
lottery (\S\ref{sec:anneal}).
(5)~The measurement theory making (1)--(4) visible: a saturation
proposition showing standard instruments are blind at fixed points, the
$\tau(n,i)$ head instrument with causal validation, and three 16-seed
races in which head instruments predict seed-level generalization while
tail metrics and path independence do not
(\S\ref{sec:pilot}--\ref{sec:tau}, \S\ref{sec:predict}).

\begin{figure}[t]
\centering
\includegraphics[width=\linewidth]{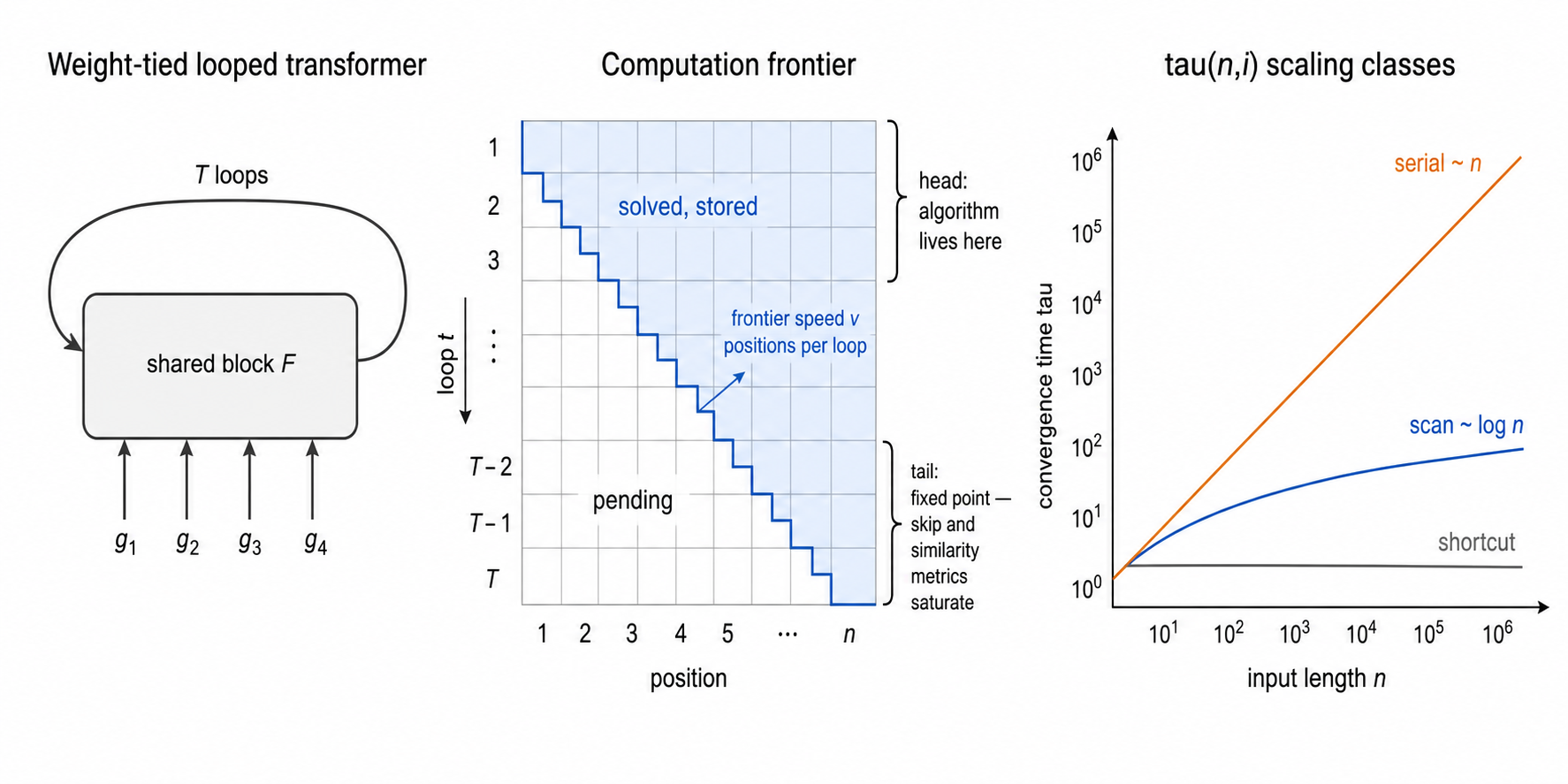}
\caption{Overview. \textbf{Left:} a weight-tied looped transformer applies
one shared block $T$ times. \textbf{Middle:} trained models solve
per-position tasks via a \emph{computation frontier} that advances a fixed
number of positions per loop; tail instruments only see the converged
region, where they provably saturate, while the algorithm lives in the
head of the trajectory. \textbf{Right:} the scaling class of the
convergence time $\tau(n,i)$ (flat / $\log n$ / linear) identifies the
mechanism.}
\label{fig:overview}
\end{figure}

\section{Related work}

\textbf{Looped and depth-recurrent transformers.}
Universal Transformers~\citep{dehghani2019universal} introduced weight tying
across depth; recent open models scale the idea
(Huginn-3.5B,~\citealp{geiping2025huginn}; Ouro,~\citealp{ouro2025}). Looped
transformers provably length-generalize on $n$-RASP-L
tasks~\citep{fan2024looped} and can emulate iterative
solvers~\citep{yang2023looped,gatmiry2024looped}; \citet{gatmiry2024looped}
show the global minimizer for in-context regression implements
preconditioned gradient descent---an inductive bias toward convergent
refinement, consistent with our selection result.

\textbf{Mechanistic accounts of loops.}
\citet{blayney2026mechanistic} document cyclic fixed points and
attention-pattern stabilization in Ouro and Huginn; their Prop.~4.2 (attention
similarity follows state convergence) anticipates our tail-saturation
analysis, which turns their observation into an instrument-validity
constraint. Lens-based studies of Huginn find limited latent-CoT
structure~\citep{lu2025latent}. \citet{kohli2026loop} patch hidden states at
(position, iteration) granularity in toy recurrent-depth models;
step-resolved \emph{data} attribution exists for looped
models~\citep{kaissis2026sdi}. None of these measure when loops carry
step-indexed computation; our $\tau$ instrument targets exactly that gap.
Feature-level circuit tracing~\citep{anthropic2025circuit} has been extended
to diffusion transformers with timestep-conditioned
transcoders~\citep{diffract2026}, but not to weight-tied loops; our results
suggest the loop-mechanism question must be settled first, since attribution
graphs inherit the same tail blindness.

\textbf{Convergence dynamics.}
Path independence---convergence to input-determined fixed points---predicts
upward generalization in deep equilibrium
models~\citep{anil2022path}; recall-style training explicitly promotes
it~\citep{bansal2022endtoend}. We adopt the asymptotic-alignment score as a
baseline and show the selection of convergence dynamics is a property of the
training pressure, not of recurrence per se.

\textbf{Complexity of transformers.}
Fixed-depth transformers sit in
$\mathrm{TC}^0$~\citep{merrill2023parallelism}; constant-depth shortcuts to
solvable-group automata exist and are learned, while non-solvable groups
($S_5$, $A_5$) admit none unless
$\mathrm{TC}^0=\mathrm{NC}^1$~\citep{liu2023shortcuts,barrington1989}.
Log-depth looped transformers close the gap via associative
scan~\citep{merrill2025depth}, and fixed-depth SSMs fail $S_5$ state
tracking~\citep{merrill2024illusion}. We use this ladder as the hardness
axis of the selection grid. Parity is expressible at constant
depth~\citep{chiang2022parity} yet hard to learn
length-generalizably~\citep{hahn2024sensitive}---expressibility and
learnability diverge, which is why our claims are about what training
\emph{selects}, not what architectures \emph{can do}.

\textbf{Imposed step-indexing.}
Iteration-wise supervision aligns loops with CoT steps~\citep{yu2025relay};
CLRS-style hints align processor iterations with algorithm
steps~\citep{velickovic2022clrs}, though hint-free models drift toward
parallel strategies~\citep{rodionov2023}. These are existence proofs that
loops \emph{can} be step-indexed when supervision imposes it; our streaming
arm imposes it architecturally, and the free arms measure whether anything
short of imposition suffices.

\section{Setup}
\label{sec:setup}

\textbf{Architecture.}
A weight-tied looped transformer applies a block $F_\theta$ (2 or 4
pre-norm transformer layers) $T$ times to a state $h^t\in\rr^{N\times d}$:
$h^{t+1}=F_\theta(h^t, e)$, where $e$ is the token embedding, optionally
re-injected each loop via a learned adapter (input
injection,~\citealp{bansal2022endtoend}). Readout is a linear head on
$\mathrm{LN}(h^T)$. Scales: \textsc{s} ($2$L, $d{=}128$), \textsc{m} ($2$L,
$d{=}256$), \textsc{l} ($4$L, $d{=}512$). No positional encoding (NoPE) in
the main grid; positional-encoding effects are factored in
Appendix~\ref{app:train}.

\textbf{Tasks.}
All tasks are per-position sequence-to-sequence, which blocks
answer-only shortcuts and gives every intermediate value a ground truth.
\emph{Pilot}: prefix parity, and LSB-first increment-with-carry (both
constant-depth shortcuttable). \emph{Selection grid}: prefix products over
$\mathbb{Z}_{60}$ (abelian), $S_4$ (non-abelian, solvable), and $S_5$
(non-solvable; $\mathrm{NC}^1$-complete word problem): input $g_1\dots g_n$,
target $p_i=g_1\!\circ\!\cdots\!\circ\! g_i$ at every $i$. The
$S_4$/$\mathbb{Z}_{60}$ controls separate non-commutativity from
non-solvability.

\textbf{Regimes.}
\emph{Free}: all tokens visible from loop 0; loop schedules
$T=\lceil\log_2 n\rceil+3$ (jittered) or fixed $T$; length curriculum
(required for $S_5$,~\citealp{merrill2025depth}). \emph{Streaming}: token
$i$ is revealed---embedded and attendable---only from loop $i{-}1$ on, so
serial computation is forced by construction; $T=n+2$. Streaming cells are
positive controls and metric ceilings, not discoveries: they calibrate
instruments and bound what emergence could look like.

\section{Tail instruments saturate at fixed points}
\label{sec:pilot}

\begin{proposition}[Tail blindness, informal]
\label{prop:tail}
Let $h^\star$ be a fixed point of the loop map $G(h)=F_\theta(h,e)$ reached
at loop $\tau^\star\le T-k$. Then (i) skipping any $k$ loops after
$\tau^\star$ leaves the output unchanged; (ii) the local Jacobians
$J_t=\partial h^{t+1}/\partial h^t$ agree for all $t\ge\tau^\star$; (iii) any
feature dictionary $z=E(h)$ and any attribution graph built from it agree
across loops $t\ge\tau^\star$. None of these quantities constrain the
computation performed at loops $t<\tau^\star$.
\end{proposition}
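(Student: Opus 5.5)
The plan is to make everything follow from one observation: once the trajectory reaches a fixed point, it is constant. Formally, the claim is $h^t=h^\star$ for all $t\ge\tau^\star$. I will prove this by induction on $t$. The base case is $h^{\tau^\star}=h^\star$ by hypothesis. For the inductive step, if $h^t=h^\star$ then $h^{t+1}=G(h^t)=G(h^\star)=h^\star$. Here it matters that $G(h)=F_\theta(h,e)$ has the same $e$ at every loop. With input injection this still holds: the injected embedding is loop-independent in the free regime. It fails in the streaming regime, so there I restrict to $\tau^\star\ge n$, after which $e$ is frozen.

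Each of (i)--(iii) then becomes a statement about a function evaluated on a constant sequence. For (i), "skipping $k$ loops after $\tau^\star$" means running $T-k$ loops instead of $T$. Both $T-k\ge\tau^\star$ and $T\ge\tau^\star$, so both runs end in $h^\star$. The readout $\mathrm{LN}(h^\star)$, the linear head and the decoded argmax are therefore identical. The same argument covers skipping any loop subset $S\subseteq\{\tau^\star,\dots,T-1\}$ with $|S|=k$: the remaining $T-k$ applications still start from $h^\star$ at or after $\tau^\star$.

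For (ii), I will note that $J_t=DG(h^t)$, the Jacobian of the single fixed map $G$ evaluated at the current state. This requires $F_\theta$ to be differentiable at $h^\star$. LayerNorm and softmax are smooth; for ReLU or GELU I will assume differentiability at $h^\star$ or interpret $J_t$ as a fixed choice of generalized derivative. Since $h^t=h^\star$ for $t\ge\tau^\star$, we get $J_t=DG(h^\star)$ for all such $t$. For (iii), a feature dictionary is a fixed map $E$, so $z^t=E(h^t)=E(h^\star)$. An attribution graph at loop $t$ is a deterministic function of $(h^t,\theta,e)$ and of derivatives of $G$ and $E$ at $h^t$, so it is constant by the same substitution. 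I will state explicitly that I take "attribution graph" to mean any such loop-index-free functional. That assumption is the only content of (iii).

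The final sentence is the main obstacle, because it is a non-identifiability claim rather than a computation. I will prove it by exhibiting two trajectories that agree on every quantity in (i)--(iii) yet differ arbitrarily before $\tau^\star$. All quantities in (i)--(iii) are functions of $(h^\star,G|_{\{h^\star\}},DG(h^\star))$ only. Hence any two maps $G_1,G_2$ are indistinguishable by these instruments if they satisfy three conditions: they share the fixed point $h^\star$, they have equal derivatives there, and they both reach $h^\star$ by $\tau^\star$. Such a pair can otherwise differ on the transient states $h^0,\dots,h^{\tau^\star-1}$. For example, one can reach $h^\star$ in one step and idle thereafter (a shortcut), while the other advances one position per loop (a serial frontier). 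Such pairs exist by modifying $G$ off a neighborhood of $h^\star$, for instance by interpolating with a bump function, which is permissible in the "informal" setting. Realizing this exactly within transformer blocks is not needed for the informal statement, and I will flag that the construction is at the level of loop maps.

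Finally, I will remark that in practice one has approximate fixed points, $\|G(h^t)-h^t\|\le\varepsilon$. A Lipschitz bound on $G$, $DG$ and $E$ then gives $O(\varepsilon)$ versions of (i)--(iii). I will state this only as a remark.
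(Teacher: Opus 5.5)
Your proof of (i)--(iii) is the paper's proof. Idempotence makes the tail constant: your induction is the step $G^m(h^j)=h^\star$ used in the appendix proof, whose exact form simply assumes $h^t=h^\star$ for $t\ge\tau^\star$. Every tail quantity is then a function of that constant state. Your restriction of attribution graphs to loop-index-free functionals matches the paper's ``constructed from \dots\ linearizations at $h^t$.'' It is genuine content rather than a formality: an end-to-end attribution $\partial y/\partial h^t=R\,DG(h^\star)^{T-t}$, with $R$ the readout Jacobian, varies with $t$ even at an exact fixed point.

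The real difference is the final sentence. The paper dispatches it by noting that (i)--(iii) quantify only over $t\ge\tau^\star$. That shows the readings depend only on tail data, but it does not by itself exhibit two heads with the same tail. Your witness does exhibit them: two loop maps sharing $h^\star$ and $DG(h^\star)$, one idling after one step and one advancing a frontier. It proves the claim in the stronger indistinguishability sense that the introduction's reading of the proposition needs (an early-finishing idler is indistinguishable from a model with no discrete steps).

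One refinement to your construction. A bump-function modification leaves the class of maps a GELU/softmax/LayerNorm block computes. Such blocks are real-analytic on $\rr^{N\times d}$, and an analytic map is determined by its values on any open set. You need less than agreement on a neighborhood, though. First-order instruments read only $G(h^\star)$ and $DG(h^\star)$. A polynomial correction that vanishes to second order at $h^\star$ and takes prescribed values at the finitely many transient points therefore gives an analytic witness. Exact realizability by a block of fixed architecture remains, as you say, a separate question. By the same token, your differentiability caveat in (ii) is unnecessary for the paper's GELU blocks; only a ReLU block would need it.
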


The proof is immediate from idempotence ($G(h^\star)=h^\star$) and is given
in Appendix~\ref{app:proof}, together with the quantitative version for
approximate fixed points ($\|G(h)-h\|\le\epsilon$). The consequence is an
instrument-validity constraint that, to our knowledge, has not been stated:
\emph{any} loop-mechanism claim based on skip tolerance, cross-step
similarity, or shared-dictionary attribution is uninformative unless the
measurement is localized to the head of the trajectory, $t<\tau^\star$.

\textbf{A population that tail instruments cannot order.}
We train 64 looped transformers (2 tasks $\times$ inject
$\{0,1\}$ $\times$ PE $\{\text{learned},\text{NoPE}\}$ $\times$
$T$-schedule $\{\text{prop},\text{fixed}\}$ $\times$ 4 seeds) to
in-distribution exact-match accuracy $1.0$; the population spans length
generalization $0.00$--$1.00$, including a $0.10$--$0.95$ spread
\emph{across seeds of a single recipe}. Every tail instrument fails to
order it: single-step skip retains every solved example in all 50
models with enough solved mid-OOD inputs (retention exactly $1.0000$);
cross-step Jacobian alignment spans $0.85$--$0.99$ and hidden-cosine /
attention similarity carry no relation to generalization (all tie-corrected
$|\rho|\le 0.32$; per-recipe details in Appendix~\ref{app:pilot}).
By Proposition~\ref{prop:tail} this is the expected reading for
trajectories that converge with slack---and measured convergence times sit
well inside the loop budget for all models. The pilot's information is not
``these models have no algorithm''; it is ``these instruments cannot
say.''

\section{The convergence-time instrument}
\label{sec:tau}

\begin{definition}[Convergence-time scaling]
For a solved input of length $n$, let $\hat y^t_i$ be the label decoded from
$h^t$ at position $i$. Define
$\tau(n,i)=\min\{k:\hat y^t_i=\hat y^T_i\ \forall t\ge k\}$, and
$\tau_{\max}(n)=\mathrm{median}_{\text{inputs}}\max_i \tau(n,i)$.
The \emph{scaling class} of a model is the growth of $\tau_{\max}(n)$:
flat (shortcut/refinement), $\Theta(\log n)$ (parallel scan), or
$\Theta(n)$ (serial).
\end{definition}

$\tau$ is forward-only (no gradients, no dictionaries, no patching), is
defined on solved inputs only (decoupling mechanism from competence), and
measures the head of the trajectory by construction: it reports when the
\emph{decodable output} stabilized, which upper-bounds when the
underlying computation finished but is not identical to it---the causal
cones of \S\ref{sec:boundary} are what license reading $\tau$
mechanistically, and we restrict mechanism claims to
cone-validated models. Per-position curves
$\tau(n,\cdot)$ further localize \emph{where} the frontier of computation
moves across loops, and on group tasks every intermediate $p_i$ has ground
truth, so head-of-trajectory decoding can be scored exactly.

\textbf{Calibration.}
The instrument must fire on a known serial mechanism and stay flat on a
known shortcut. On streaming parity (seriality forced by construction) it
reads slope $d\tau_{\max}/dn=1.00$. On fixed-$T$ parity models (bounded
budget) it reads $0.39$--$0.41$ locally within their narrow solved range,
bounded by $T$. Learned-PE models, which hit the positional wall, are
correctly reported as out of instrument range rather than
misclassified.\footnote{Complete calibration protocol, including the
frontier-resolved skip curve that replaces single-step skip (skip $k$
consecutive loops at every offset, stratified by distance to measured
$\tau$), in Appendix~\ref{app:calib}.}

\textbf{A first head-of-trajectory signal.}
On the pilot's seed-only recipe (parity, proportional $T$), $\tau$ slopes
order with generalization where every tail metric reads flat---the signal
quantified at population scale in \S\ref{sec:predict}.

\section{The linear computation frontier}
\label{sec:boundary}

\begin{figure}[t]
\centering
\includegraphics[width=\linewidth]{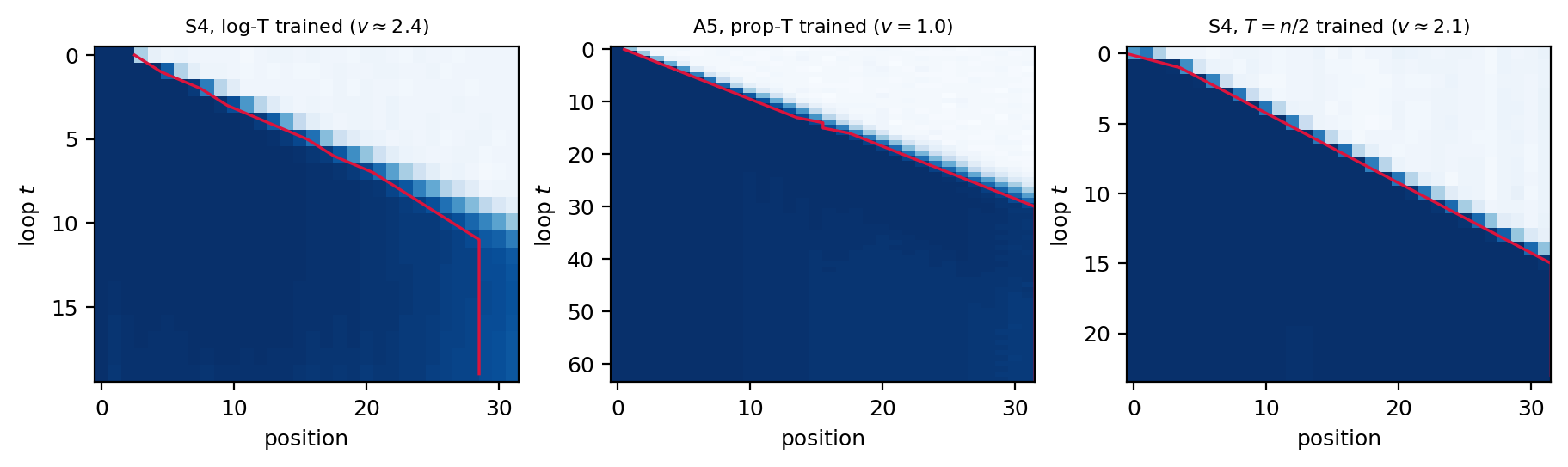}
\caption{Per-(loop, position) accuracy heatmaps at $n{=}32$ (blue: solved;
red line: contiguous solved prefix). Free training discovers a
\emph{linear computation frontier}: each loop extends the solved prefix by
a constant number of positions. The slope is a per-model constant across
lengths---a mechanism fingerprint---and is set by the training budget:
tight log-$T$ training yields $v\!\approx\!2.4$ (left), $T{=}n$ training
yields exactly $v{=}1.0$ (middle), $T{=}n/2$ yields $v\!\approx\!2.1$
(right).}
\label{fig:frontier}
\end{figure}

\textbf{Free training discovers frontier algorithms.}
On $S_4$ and $A_5$ prefix products, every model that learns the task
implements a moving computation frontier (Fig.~\ref{fig:frontier}): the
contiguous solved prefix advances $v$ positions per loop, with $v$
essentially constant across input lengths within a model
(Table~\ref{tab:speeds}). This is loop-indexed computation in the head of
the trajectory---not the constant-depth shortcut the solvable groups
admit, not a $\log n$ scan, and not fixed-point refinement. Because the
frontier is linear, budget-starved models fail \emph{late positions}
rather than whole inputs, and granting more loops than ever seen in
training rescues them per-position (Table~\ref{tab:rescue}): accuracy at
late positions of in-distribution lengths goes from near-zero at the
training budget to near-perfect at roughly twice the budget. Unlike prior
loop-extrapolation results on longer \emph{inputs}, this is rescue at
\emph{fixed} length, position-resolved, and quantitatively predicted by
$T \ge \tau(n,i)$. Table~\ref{tab:rescue} also exposes the other side of
the coin: pushing $T$ far beyond frontier completion \emph{degrades}
tight-budget models (overthinking), while loose-contract models remain
stable---the safe operating range of test-time compute is itself a
contract property.

\begin{table}[t]
\centering
\caption{Frontier speed and uniformity by training contract. $v$: median
frontier speed over $n\in\{16,32,64\}$ [seed range]; uniformity: median
fraction of $n{=}64$ solved by the frontier at $2.5\times$ the schedule
budget. $A_5$ pools all seeds including non-cracking ones (bimodal;
\S\ref{sec:predict}).}
\label{tab:speeds}
\small
\begin{tabular}{llccc}
\toprule
Group & Contract (demand $n/T$) & runs & $v$ & unif.@64 \\
\midrule
$S_4$ \textsc{s} & $T{=}2n$ \hfill ($0.5$) & 4 & $0.51$ [$.51,.52$] & $.55$ \\
$S_4$ \textsc{s} & $T{=}n$ \hfill ($1$) & 2 & $1.00$ [$1.00,1.00$] & $.98$ \\
$S_4$ \textsc{s} & $T{=}n/2$ \hfill ($2$) & 4 & $2.00$ [$1.93,2.13$] & $.19$ \\
$S_4$ \textsc{s} & $T{=}n/4$ \hfill ($4$) & 2 & $4.00$ [$3.60,4.00$] & $.00$ \\
$S_4$ \textsc{s} & log-$T$ \hfill (mixed) & 16 & $2.00$ [$1.86,2.42$] & $.33$ \\
$S_4$ \textsc{m} & log-$T$ \hfill (mixed) & 4 & $1.88$ [$1.86,1.88$] & $.41$ \\
$S_4$ \textsc{l} & log-$T$ \hfill (mixed) & 4 & $2.73$ [$2.09,3.33$] & $.09$ \\
$A_5$ \textsc{s} & $T{=}n$, horizon \hfill ($1$) & 16 & $0.96$ [$0.08,1.03$] & --- \\
$S_5$ \textsc{m} & op.-first, $T{=}n$ \hfill ($1$) & 2 & $1.00$ & $.56$ \\
$\mathbb{Z}_{60}$ \textsc{s} & log-$T$ \hfill (mixed) & 8 & $2.15$ [$1.82,5.17$] & --- \\
ETH parity & $T{=}n$ \hfill ($1$) & 4 & $\approx 1.5$ & --- \\
\bottomrule
\end{tabular}
\end{table}

\begin{table}[t]
\centering
\caption{Rescue surface and overthinking: mean per-token accuracy vs.\
test-time loops $T$ (group means over first three seeds). Tight-budget
(log-$T$) groups are rescued by $T$ beyond their training budget and then
degrade; the loose-contract operator-first $S_5$ improves monotonically to
frontier completion; $\mathbb{Z}_{60}$ and annealed models collapse
sharply past their operating range.}
\label{tab:rescue}
\small
\begin{tabular}{llccccccc}
\toprule
& & \multicolumn{7}{c}{test loops $T$} \\
Group & $n$ & 4 & 8 & 12 & 16 & 24 & 32 & 48 \\
\midrule
$S_4$ log \textsc{s} & 32 & .38 & .69 & .95 & \textbf{.97} & .91 & .85 & .78 \\
$S_4$ log \textsc{s} & 64 & .21 & .37 & .52 & .63 & \textbf{.66} & .60 & .55 \\
$S_4$ log \textsc{l} & 32 & .38 & .79 & \textbf{.96} & .86 & .74 & .69 & .64 \\
$S_5$ op-first \textsc{m} & 32 & .13 & .26 & .38 & .51 & .78 & \textbf{.99} & --- \\
$S_5$ op-first \textsc{m} & 64 & --- & .13 & .19 & .26 & .38 & .51 & .64 \\
anneal $A_5$ p$\to$l & 32 & .35 & .61 & \textbf{.76} & .76 & .52 & .26 & .13 \\
$\mathbb{Z}_{60}$ log \textsc{s} & 32 & .53 & \textbf{.75} & .74 & .74 & .45 & .12 & .05 \\
ETH prop & 32 & .52 & .63 & .69 & .73 & .75 & \textbf{.76} & .75 \\
\bottomrule
\end{tabular}
\end{table}

\textbf{The budget law.}
Frontier speed is set by the training contract (Fig.~\ref{fig:budget},
Table~\ref{tab:speeds}): across contracts spanning an $8\times$ range of
demand, the measured $v$ tracks $n_{tr}/T_{tr}$ with a power-law
exponent of $0.98\pm0.04$ (95\% CI; $R^2{=}0.99$ over 23 models at 8
demand levels; Fig.~\ref{fig:budgetfit}, left)---i.e.\ speed equals
demand, exactly unity under the $T{=}n$ contract---so SGD consistently
arrives at a frontier whose speed matches the \emph{minimum the contract
demands}, and $v$ is an experimenter-controllable knob
(Fig.~\ref{fig:concepts}, left). We state
this as a robust empirical regularity rather than a selection theorem:
demonstrating that faster-but-equally-trainable solutions exist at the
same contract (and are avoided) would require a constructive search over
solutions we have not performed; what the data establish is that the
demanded minimum is met, never exceeded, across an $8\times$ range of
consistent demands. Two
refinements. First, under the \emph{mixed} demands of the log-$T$
schedule, models compromise near the middle of the demand range and
under-deliver at the long end; a \emph{consistent} high demand
($T{=}n/4$ at every length) is met even at the smallest scale
(Table~\ref{tab:speeds})---the apparent speed ceiling is a product of
demand inconsistency, not capacity. Second, scale buys length robustness
more than speed: larger blocks run their frontiers slightly faster but
hold them together at longer lengths (Table~\ref{tab:rescue}), a
speed--robustness trade visible in the uniformity column of
Table~\ref{tab:speeds}.

\textbf{Cross-task form of the law.}
Replicating the demand sweep on three further tasks yields the law's
complete form: $v$ tracks demand within a task-dependent operating
band, $v \approx \mathrm{clip}(n_{tr}/T_{tr},\; v_{\mathrm{free}},\;
v_{\max})$. On $A_5$ the law is exact across an $8\times$ demand range
($0.5\!\to\!0.48$, $1\!\to\!1.00$, $2\!\to\!1.90$) with mild
under-delivery at the top ($4\!\to\!\approx\!2.5$: the expensive
$60{\times}60$ operator caps $v_{\max}$). Parity floors: measured $v$
never drops below $\approx\!1$--$1.5$ however loose the contract,
because attention composes cheap XOR operators over more than one
position at no extra cost ($v_{\mathrm{free}}$)---slowing down further
would not be simpler, so simplicity bias has nothing to push against;
this is also why loose-contract parity models carry idle tail loops
(the slack that saturates tail instruments, \S\ref{sec:pilot}).
$\mathbb{Z}_{60}$ obeys the law at loose contracts but \emph{escapes}
under pressure ($2\!\to\!4.2$, $4\!\to\!8.0$): the abelian task has a
parallel shortcut, and a tight contract pushes the model off the
frontier class entirely rather than to a faster frontier. The three
regimes---law-bound ($A_5$, $S_4$), floored (parity), escaped
($\mathbb{Z}_{60}$)---share one reading: training pressure sets the
mechanism's speed exactly when the task forbids anything cheaper.

\begin{figure}[t]
\centering
\includegraphics[width=\linewidth]{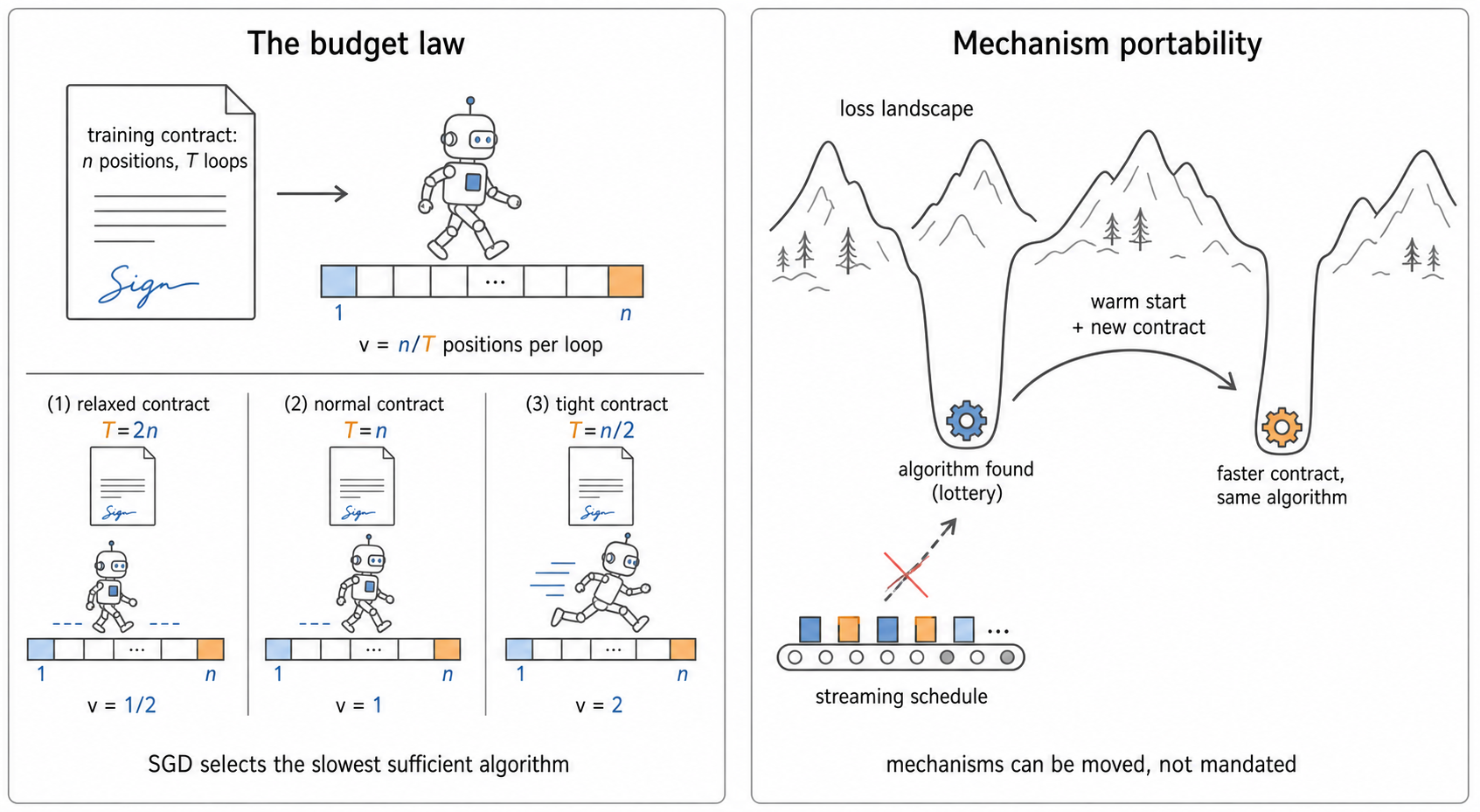}
\caption{Two organizing findings. \textbf{Left:} the budget law---the
training contract prices the frontier speed; the demanded minimum is
met, never exceeded. \textbf{Right:} mechanism portability---the
algorithm is found by a basin lottery, transfers across contracts by
warm-starting, and cannot be installed through the input schedule.}
\label{fig:concepts}
\end{figure}

\begin{figure}[t]
\centering
\begin{minipage}[t]{0.36\linewidth}
\vspace{0pt}
\includegraphics[width=\linewidth]{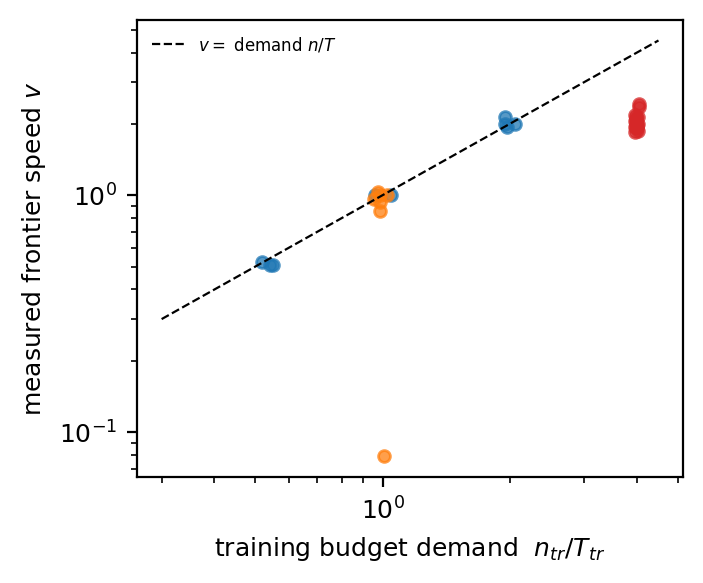}
\end{minipage}\hfill
\begin{minipage}[t]{0.60\linewidth}
\vspace{0pt}
\includegraphics[width=\linewidth]{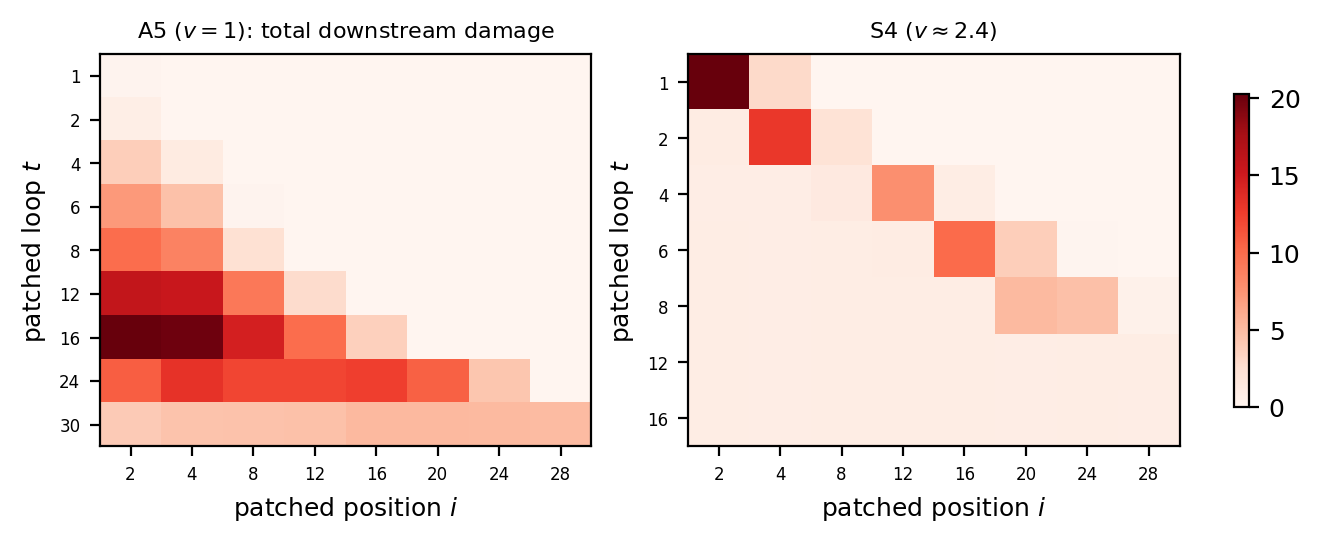}
\end{minipage}
\caption{\textbf{Left (budget law):} measured frontier speed tracks the
training demand $n_{tr}/T_{tr}$ (dashed: $v=$ demand) for consistent
demands; the log-$T$ cell (red, mixed demands averaging 4) compromises at
$v\!\approx\!2$. \textbf{Right (causal cones):} total downstream damage
from patching state at (position $i$, loop $t$). Damage is strictly
downstream (zero upstream in every cell), tracks the moving frontier, and
the cone slope reproduces each model's $v$ independently.}
\label{fig:budget}
\end{figure}

\textbf{The frontier is causal.}
Resampling the state at (position $i$, loop $t$) and rolling out the
remaining loops yields damage that is strictly downstream-in-position
(zero upstream damage in all $72$ grid cells for both models tested),
structured by the frontier: positions already passed by the frontier are
immune to upstream corruption, pending positions inherit it, and the
boundary between the two moves at each model's measured $v$
(Fig.~\ref{fig:budget}, right). $\tau$ is therefore a readout of a causal
computational object, not a decoding artifact. The cones also reveal two
storage phenotypes (Table~\ref{tab:repair}): one model class
\emph{repairs} corrupted stored values within a loop (negligible
self-damage behind the frontier), while the rest are store-once
(near-total, permanent self-damage)---a repair axis orthogonal to speed,
and notably rare: of the frontier models probed, only the star $A_5$
solution self-heals.

\begin{table}[t]
\centering
\caption{Repair phenotype: mean self-damage when the stored value at a
position already passed by the frontier is resampled (grid cells with
$t>i$). Low = self-healing; high = store-once.}
\label{tab:repair}
\small
\begin{tabular}{lcccc}
\toprule
& $A_5$ $T{=}n$ & $S_4$ log & $S_4$ $T{=}n/2$ & $S_5$ op-first \\
\midrule
self-damage behind frontier & $.02$ & $.95$ & $.96$ & $.99$ \\
\bottomrule
\end{tabular}
\end{table}

\textbf{What limits the frontier: operator scale, not circuit class.}
$S_5$ ($\mathrm{NC}^1$-complete) is never learned by free training under
the standard length curriculum---but the wall has nothing to do with
non-solvability (Fig.~\ref{fig:walls}). It is already present at
$n{=}2$: the time to master the single composition $g_1\!\circ\!g_2$
grows steeply with group order and, for $S_5$ at the smallest scale,
exceeds the entire training budget (Table~\ref{tab:opspeed}); one scale
up, the same operator is learned comfortably. The matched-order contrast
is decisive: $A_5$ is exactly as $\mathrm{NC}^1$-hard as $S_5$, yet
prop-$T$ training chains it to full length generalization at twice the
training length (Table~\ref{tab:anneal}, top). And the wall is a
\emph{curriculum} pathology, not an impossibility: an operator-first
curriculum---an extended $n{=}2$ stage that masters the composition
table before any chaining---cracks $S_5$ in \emph{every} seed attempted
($10/10$ across two machines, exact match $\ge 0.92$ at $2\times$
training length). Joint learning of the operator and the chain
deadlocks; sequenced learning does not, reliably. Non-solvability blocks nothing at these scales; the order of
the group sets the optimization cost of its operator, and curricula
settle it.

\begin{figure}[t]
\centering
\includegraphics[width=.9\linewidth]{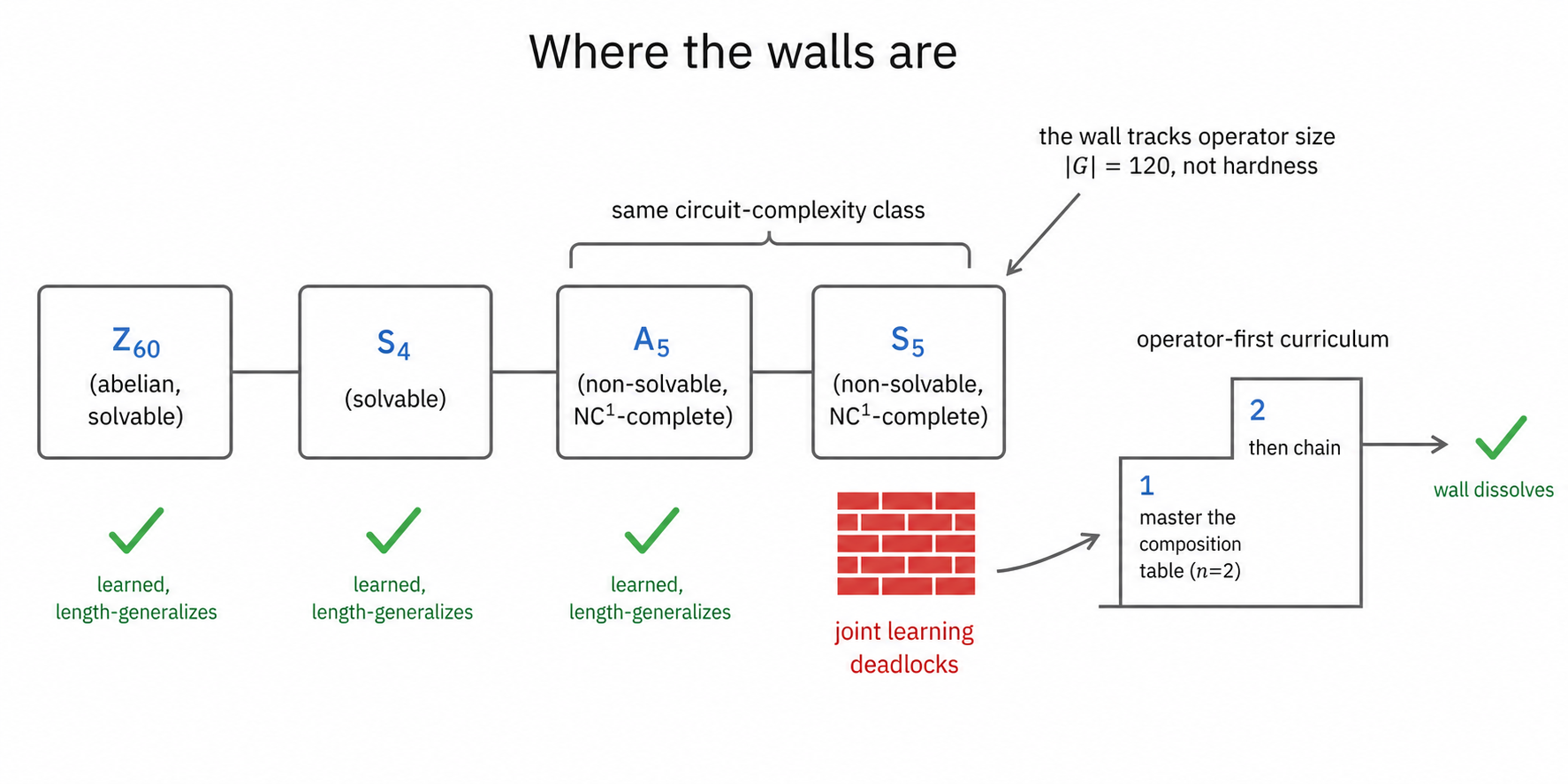}
\caption{The learning walls track operator size, not circuit-complexity
class: the non-solvable $A_5$ trains like the solvable groups, its
complexity-class twin $S_5$ deadlocks, and an operator-first curriculum
dissolves the deadlock.}
\label{fig:walls}
\end{figure}

\begin{table}[t]
\centering
\caption{Optimization cost of the group operator: training steps to
per-token accuracy $\ge .98$ on pair composition ($n{=}2$).
$S_4$/$A_5$ small-scale entries from exact-match diagnostics.}
\label{tab:opspeed}
\small
\begin{tabular}{lcccc}
\toprule
& $S_4$ ($|G|{=}24$) & $\mathbb{Z}_{60}$ & $A_5$ ($|G|{=}60$) & $S_5$ ($|G|{=}120$) \\
\midrule
scale \textsc{s} & $\sim$0.5k & 2.5k & $\sim$4k & $>$40k (never) \\
scale \textsc{m} & --- & 2.0k & --- & 6.5k \\
\bottomrule
\end{tabular}
\end{table}

\textbf{Imposed seriality does not help.}
Streaming input release (token $i$ revealed at loop $i$), the
by-construction serial schedule, trains perfectly on parity---with length
generalization to four times the training length---but \emph{fails} on
every group task including $A_5$, which free scheduling solves
(Table~\ref{tab:stream}). Adding per-loop supervision on the revealed
prefix does not rescue it. The frontier that free training invents is
easier to learn than externally imposed seriality: mechanism selection
cannot be forced through the input schedule, which motivates the
intervention that works (\S\ref{sec:anneal}).

\begin{table}[t]
\centering
\caption{Mechanism cannot be mandated through the input schedule:
outcome by task $\times$ schedule (\checkmark\ = learns and
length-generalizes; $\times$ = fails to learn beyond trivial positions).}
\label{tab:stream}
\small
\begin{tabular}{lccc}
\toprule
& free & streaming & streaming + per-loop sup. \\
\midrule
parity & \checkmark & \checkmark & --- \\
$A_5$ & \checkmark\ (lottery) & $\times$ & $\times$ \\
$S_5$ & $\times$ (op.\ wall) & $\times$ & $\times$ \\
$S_5$, operator-first & \checkmark & --- & --- \\
\bottomrule
\end{tabular}
\end{table}

\textbf{Controls: the tying axis isolated.}
We train a tying-degree ladder at fixed effective depth 12: the $D$
layers are drawn from $G$ weight groups cycled ($G{=}1$: our fully tied
loop; $G{=}12$: a standard untied transformer; intermediate $G$: partial
tying), in two regimes---constant width (parameters grow $10\times$ with
$G$) and parameter-matched (width shrunk to hold parameters fixed)
(Table~\ref{tab:tying}). The regimes disagree, and the disagreement is
the point: with free parameters, untying appears mildly beneficial; with
parameters matched, the fully untied model is \emph{worst} at length
extrapolation, and on $A_5$ every configuration with $G\ge3$ fails to
learn the task at all ($9/9$ seeds) while tied configurations learn it.
Weight reuse is not a resource handicap but a load-bearing inductive
bias---necessary for learnability on the harder group, and for
extrapolation on the easier one---consistent with standard transformers
learning parallel-scan mechanisms on this family~\citep{liu2023shortcuts}.
Learned absolute positions (giving the model the addressing a
$\log$-depth scan would need) leave the mechanism unchanged: the frontier
persists, with the same rescue signature.

\begin{table}[t]
\centering
\caption{Tying-degree control at fixed effective depth 12 ($G$ weight
groups cycled; mean per-token accuracy at $n{=}64$, 3 seeds per cell).
With parameters matched, untying hurts extrapolation ($S_4$) and destroys
learnability ($A_5$, $G\ge3$ at chance).}
\label{tab:tying}
\small
\begin{tabular}{llcccccc}
\toprule
& & $G{=}1$ (tied) & $G{=}2$ & $G{=}3$ & $G{=}4$ & $G{=}6$ & $G{=}12$ (untied) \\
\midrule
\multirow{2}{*}{$S_4$} & const.\ width & .21 & .27 & .30 & .30 & .32 & .26 \\
& param-matched & .21 & \textbf{.28} & .27 & .25 & .25 & \textbf{.15} \\
\midrule
\multirow{2}{*}{$A_5$} & const.\ width & .10 & .23 & .17 & .25 & .23 & .15 \\
& param-matched & .10 & \textbf{.15} & \textbf{.05} & .05 & .05 & .05 \\
\bottomrule
\end{tabular}
\end{table}

\section{Annealing the budget transfers the mechanism}
\label{sec:anneal}

If the frontier is real and its speed is priced by the budget, mechanisms
should transfer across budgets by warm-starting---and transfer should
bypass the seed lottery that afflicts training from scratch (across
$22$ seeds on two machines, roughly a third crack chained $A_5$; the
rest never pass the second position---a bimodal outcome stable across
hardware and run order). Both predictions hold, in every seed attempted
(Table~\ref{tab:anneal}). Warm-starting from a $T{=}n$ solution and
re-training under the tight log-$T$ contract meets the new contract;
the reverse anneal fully solves lengths that no from-scratch run of
either schedule achieved. Most tellingly, the transferred mechanism is
\emph{re-priced}: annealing a $v{=}1$ solution into the tight contract
doubles its measured frontier speed, and the reverse anneal slows a
fast frontier back toward unity (Table~\ref{tab:anneal})---the budget
law acts on warm-started mechanisms just as it does on fresh ones, but
without re-running the lottery. Once any budget regime finds the
algorithm, the algorithm---not the budget---is the hard-won object, and
it retrains cheaply into any contract (Fig.~\ref{fig:concepts}, right).
The annealed models inherit the tight contract's narrow operating range
(Table~\ref{tab:rescue}), a caveat for deployment.

\begin{table}[t]
\centering
\caption{Budget annealing: warm-start across loop contracts. ``Cracked''
= learns chained composition with length generalization. The frontier
speed of the transferred mechanism is re-priced by the new contract.}
\label{tab:anneal}
\small
\begin{tabular}{llccc}
\toprule
Arm & contract & cracked & $v$ before & $v$ after \\
\midrule
$A_5$ from scratch & $T{=}n$ & 8/22 seeds & --- & $\approx 1.0$ \\
$A_5$ from scratch & log-$T$ & bimodal, weaker & --- & $\approx 2$ \\
$A_5$ anneal & $T{=}n \to$ log & 4/4 & $1.0$ & $2.0$ \\
$S_4$ anneal & $T{=}n \to$ log & 4/4 & $1.0$ & $2.4$ \\
$S_4$ anneal & log $\to T{=}n$ & 4/4 & $2.4$ & $1.2$ \\
$S_5$ operator-first & $T{=}n$ & 10/10 & --- & $1.0$ \\
\bottomrule
\end{tabular}
\end{table}

\begin{figure}[t]
\centering
\includegraphics[width=.42\linewidth]{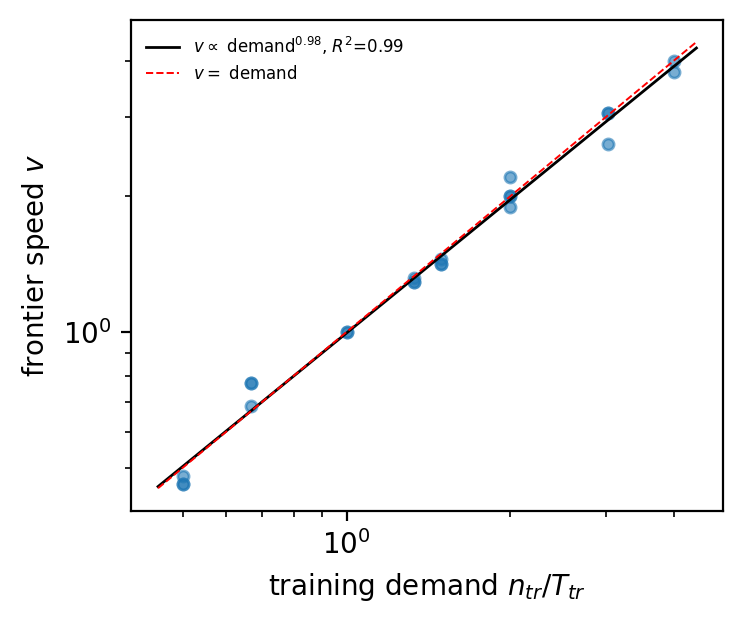}\hfill
\includegraphics[width=.42\linewidth]{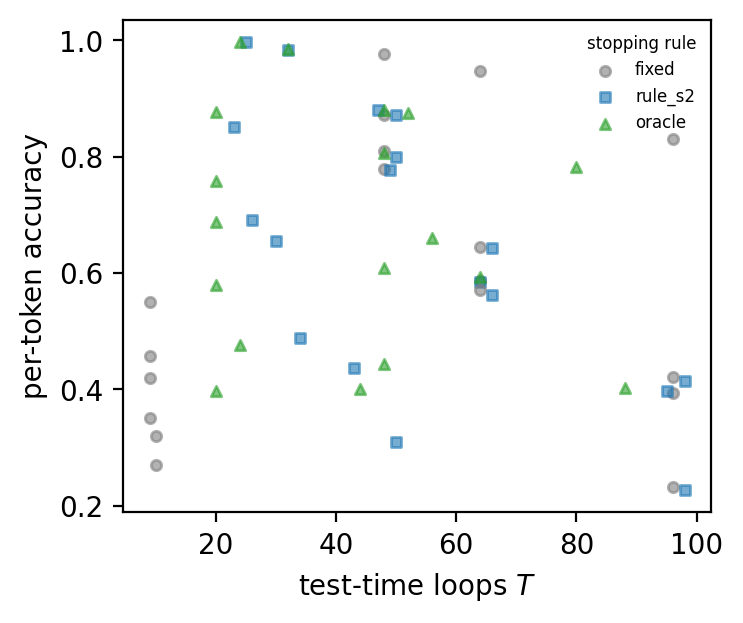}
\caption{\textbf{Left:} the budget law densified---frontier speed vs.\
training demand over 23 models spanning $8\times$ in demand; the fit is
$v\propto\text{demand}^{0.98\pm0.04}$, $R^2{=}0.99$ (dashed: $v=$
demand). \textbf{Right:} the law as a halting rule---stopping at
$T^\star{=}\lceil n/\hat v\rceil$ with $\hat v$ estimated in
distribution (blue) nearly matches an oracle $T$ sweep (green) and far
exceeds the training schedule's own fixed $T$ (grey), while avoiding the
overthinking collapse of over-long $T$.}
\label{fig:budgetfit}
\end{figure}

\textbf{A halting rule falls out of the law.}
Because $v$ is estimable in distribution and the frontier completes at
$T\!\approx\!n/v$, the budget law is directly actionable: stopping at
$T^\star=\lceil n/\hat v\rceil$ (with $\hat v$ read off training
lengths) recovers near-oracle accuracy at test lengths and avoids
overthinking (Fig.~\ref{fig:budgetfit}, right)---e.g.\ on a tight-budget
$S_4$ model at $n{=}48$ the training schedule's fixed $T$ scores far
below the rule, which nearly matches the best achievable over a full $T$
sweep. Test-time compute for looped models has a principled setting, not
a hyperparameter to tune.

\section{Head instruments predict generalization; tail instruments do not}
\label{sec:predict}

We race metrics measured \emph{strictly in distribution}
($n\le n_{tr}{=}32$: frontier speed at $n{=}16$ and $32$, frontier
completion at $n{=}32$ under the schedule budget, $\tau$ slope over ID
lengths, and the tail baselines) against length generalization measured
\emph{strictly out of distribution} (mean per-token accuracy at
$n\in\{64,128\}$, schedule-matched $T$, recomputed from checkpoints),
across three $16$-seed single-recipe populations---no recipe confounds,
no predictor sees any length beyond training, tie-aware Spearman,
partial correlations controlling adjacent-loop cosine, and median-split
AUROC.

\begin{table}[t]
\centering
\caption{Predicting OOD generalization ($n\in\{64,128\}$) from strictly
in-distribution measurements ($n\le 32$), $16$ seeds per cell. Entries:
Spearman $\rho$ (median-split AUROC). Head instruments: frontier speed
$v_{32}$ and frontier completion at $n{=}32$ under the schedule budget;
tail baselines: adjacent-loop cosine, Jacobian alignment; AA = asymptotic
alignment~\citep{anil2022path}. Dashes: metric undefined for that cell
(failed seeds have no frontier; loose contracts leave no ID speed
variance).}
\label{tab:race}
\small
\begin{tabular}{lccc}
\toprule
Metric [ID, $n\le32$] & $S_4$ log-$T$ & $A_5$ prop-$T$ & parity prop-$T$ \\
\midrule
frontier speed $v_{32}$ [head] & $\mathbf{+.87}\;(.95)$ & --- & $+.38\;(.62)$ \\
frontier completion@32 [head] & $+.83\;(.94)$ & $\mathbf{+.90}\;(1.00)$ & $\mathbf{+.58}\;(.84)$ \\
cosine (adjacent) [tail] & $-.02\;(.44)$ & $-.77\;(.03)$ & $+.01\;(.37)$ \\
Jacobian align.\ [tail] & $-.61\;(.10)$ & $-.79$ & $+.11\;(.44)$ \\
AA / path indep. & $+.31\;(.60)$ & $-.66\;(.14)$ & $+.21\;(.65)$ \\
\bottomrule
\end{tabular}
\end{table}

In every cell the best predictor is a head instrument
(Table~\ref{tab:race}), under a protocol in which no predictor sees any
length beyond training. The consistent picture: frontier
\emph{completion at the training frontier}---did the mechanism finish
its work within the contract, measured entirely in distribution---is the
dominant predictor everywhere it is defined (AUROC up to $1.00$), and
ID frontier \emph{speed} adds signal exactly where the budget law says
it should: in the tight-budget cell, where speed is under pressure and
varies across seeds, but not in loose-contract cells, where the contract
pins it. Tail metrics are not uniformly blind---in the sharply bimodal
$A_5$ cell, adjacent cosine separates ``still computing'' from
``collapsed''---but they carry no signal \emph{within} a mechanism
class, where head instruments retain it. The path-independence score
never wins a cell. Failed seeds having no measurable frontier is itself
the strongest ID signal in the bimodal cells: the presence of a frontier
at training lengths is close to a sufficient statistic for OOD fate. Where the negative Jacobian correlations are significant, they point
the same way: seeds whose loop maps align early (converge early) generalize
worse.

\section{Open-benchmark validation}
\label{sec:eth}

On the community easy-to-hard prefix-sums benchmark
\citep{schwarzschild2021can} (official 32-bit training split, official
512-bit test set), the same phenomena replicate without modification
(Table~\ref{tab:eth}). Under the $T{=}n$ contract, successful seeds learn
a linear frontier with speed constant across lengths and extrapolate to
four times the training length at near-perfect \emph{exact-match}
accuracy; the seed lottery replicates; and the budget law predicts the
log-$T$ arm's total failure at $512$ bits exactly (demand more than an
order of magnitude above the learned speed). The frontier breaks between $128$ and $256$
bits (Table~\ref{tab:eth}), well short of the $512$-bit extrapolation
that convolutional Deep Thinking networks achieve on this
dataset~\citep{bansal2022endtoend}---an
honest asymmetry with a mechanistic reading: convolutional locality
\emph{forces} a linear frontier and shields it from long-range
interference, whereas the frontier our models \emph{learn} inside global
attention inherits attention's length fragility. The mechanism class
transfers to the open benchmark; its range is architecture-limited. We
emphasize this section is mechanism validation, not a state-of-the-art
generalization claim---convolutional architectures remain stronger on
this benchmark, for the architectural reason above.

\begin{table}[t]
\centering
\caption{Easy-to-hard prefix sums (train 32-bit official split):
exact-match accuracy by test length, per seed, $T{=}n$ contract; log-$T$
contract fails everywhere beyond training length as the budget law
predicts.}
\label{tab:eth}
\small
\begin{tabular}{lccccc}
\toprule
Arm & $n{=}32$ & $n{=}64$ & $n{=}128$ & $n{=}256$ & official 512 \\
\midrule
$T{=}n$, seed 2 & $1.00$ & $1.00$ & $1.00$ & $0$ & $0$ \\
$T{=}n$, seed 3 & $1.00$ & $1.00$ & $0.96$ & $0$ & $0$ \\
$T{=}n$, seeds 0/1 (lottery) & $1.00$ & $0$ & $0$ & $0$ & $0$ \\
log-$T$, all seeds & $1.00$ & $0$ & $0$ & $0$ & $0$ \\
DT conv-RNN \citep{bansal2022endtoend} & \multicolumn{5}{c}{solves 512 (locality-forced frontier)} \\
\bottomrule
\end{tabular}
\end{table}

\section{Discussion}

\textbf{What the loop buys.}
The loop's contribution is neither ``more refinement'' nor ``more
algorithm steps'' generically: free training installs a linear frontier
whose speed is priced by the training contract, whose robustness varies
by seed along an orthogonal repair axis, and whose completion---not
convergence---is what test-time loops purchase. This reframes halting and
loop-extrapolation design: the useful stopping signal is frontier
completion ($T \ge n/v$), predictable per input length, rather than
state convergence. It equally reframes interpretability for
depth-recurrent LMs: instruments that probe converged states inherit
Proposition~\ref{prop:tail}'s blindness, and should be replaced or
augmented by head-of-trajectory measurements.

\textbf{Localization, transport, and editing.}
Appendix~\ref{app:local} tests our claims against the localization
literature's standards: frontier-state transplants succeed or fail
exactly as the repair phenotype predicts, module-resolved cones place
the frontier's routing in the first attention layer, and the
localization-vs-editing dissociation of \citet{hase2023does} becomes an
architectural theorem under weight tying, whose smearing signature we
measure. All our localization claims accordingly live, and are
validated, in activation space.

\textbf{Selection, not expressivity.}
Standard-depth transformers learn parallel scans on this task family and
fail to find sequential mechanisms~\citep{liu2023shortcuts}; our untied
control degrades similarly. Weight tying flips the selection to a serial
frontier even when positional addressing for a $\log$-depth scan is
supplied. Which algorithm SGD finds is set by the architecture prior and
the budget contract---not by what the architecture can express.

\textbf{Scope.}
Claims concern what end-to-end training selects in weight-tied loops on
per-position algorithmic sequence tasks at $\le 25$M parameters. The
$S_5$ result is a statement about optimization cost of the composition
operator at $|G|{=}120$, not an $\mathrm{NC}^1$ impossibility claim; the
$\mathrm{TC}^0\!\ne\!\mathrm{NC}^1$ separation is conditional and forces
only super-constant depth. Pretrained depth-recurrent LMs (Huginn, Ouro)
are out of scope here; extending $\tau$ to them requires resolving
tokenization and format confounds and is left to future work.

\bibliography{refs}
\bibliographystyle{iclr2027_conference}

\appendix
\section{Localization, transport, and editing under weight tying}
\label{app:local}

The survey lines that dominate interpretability---neuron
localization~\citep{dai2022knowledge,wang2022skill,tang2024language},
causal-tracing-guided editing~\citep{meng2022rome}, and circuit
discovery~\citep{conmy2023acdc,yao2024knowledge}---all index structure by
\emph{parameter address} (layer, neuron, head). Weight tying voids that
index: one address serves every loop. Three experiments show what
replaces it.

\textbf{State transplant follows the repair phenotype.}
Splicing input $A$'s frontier states into input $B$'s run at matched
(position, loop)---the state-level analogue of function-vector
transport~\citep{todd2024function}---has opposite fates in the two
storage phenotypes of Table~\ref{tab:repair}: in the store-once $S_4$
model the transplanted prefix \emph{sticks} (retained nearly perfectly,
essentially never reverting), while the self-healing $A_5$ model
\emph{repairs} the transplant away, reverting almost every position to
its own input's answer (Table~\ref{tab:transplant}). Together with
\S\ref{sec:anneal} this completes a three-way separation:
\emph{states} transport where storage is passive; the \emph{mechanism}
transports universally in weight space (annealing); and \emph{schedules}
transport nothing.

\begin{table}[t]
\centering
\caption{Frontier-state transplant ($A$'s prefix states spliced into
$B$'s run at the frontier, loop $t{=}16$): retention vs.\ repair is
governed by the storage phenotype of Table~\ref{tab:repair}.}
\label{tab:transplant}
\small
\begin{tabular}{lccc}
\toprule
Model & keeps $A$ prefix & reverts to $B$ & phenotype \\
\midrule
$S_4$ log-$T$ & $.97$--$1.00$ & $.05$ & store-once \\
$A_5$ $T{=}n$ & $.02$ & $.99$ & self-healing \\
\bottomrule
\end{tabular}
\end{table}

\textbf{Module-resolved cones.}
Splitting the (position, loop) patch by module localizes the frontier's
machinery within the tied block: the first layer's attention carries
nearly all cross-position damage (an order of magnitude above the second
layer's, which is essentially inert), with both MLPs intermediate---the
routing that advances the frontier lives in one attention layer, a
module-level claim no fixed-depth circuit method can even express, since
its nodes are parameter addresses rather than (module, loop) pairs.

\textbf{The localization--editing dissociation is architectural.}
\citet{hase2023does} showed that causal-tracing localization does not
predict where weight edits succeed. Under weight tying this dissociation
is a theorem---any weight edit hits every loop---and we measure its
signature: perturbing FFN rows selected by gradient attribution at a
target band of loops produces damage that accumulates monotonically with
loop index and shows \emph{zero} band specificity (inside-band and
outside-band damage statistically identical). Activation-space
localization cannot inform weight-space editing here even in principle;
this is why all our localization claims live, and are validated, in
activation space (\S\ref{sec:tau}). Conversely, it sharpens what our
instruments mean: $\tau$ and the cones localize \emph{when and where
computation happens}, a question that remains well-posed precisely
because it is not a claim about parameter storage.

\section{Tail blindness: formal statement}
\label{app:proof}

\begin{proposition}[Exact form]
Let $G:\rr^{N\times d}\to\rr^{N\times d}$ be the loop map
$G(h)=F_\theta(h,e)$ for a fixed input embedding $e$, and suppose the
trajectory $h^{t+1}=G(h^t)$ reaches a fixed point: $h^t=h^\star$ for all
$t\ge\tau^\star$, with $\tau^\star\le T-k$. Then:
(i) for any $j\ge\tau^\star$ and any $k\le T-j$, skipping loops
$j,\dots,j{+}k{-}1$ (i.e., outputting
$G^{T-j-k}(h^{j})$ in place of $G^{T-j}(h^{j})$) leaves the final state,
and hence the decoded output, unchanged;
(ii) $J_t:=\partial G/\partial h\,\big|_{h^t}=J_{\tau^\star}$ for all
$t\ge\tau^\star$, so any similarity functional of local Jacobians is
constant on the tail;
(iii) for any feature encoder $E$ and any attribution graph constructed
from activations $z^t=E(h^t)$ and linearizations at $h^t$, all
tail-indexed quantities coincide for $t\ge\tau^\star$.
None of (i)--(iii) constrains $\{h^t\}_{t<\tau^\star}$ or the map's
behavior off the trajectory.
\end{proposition}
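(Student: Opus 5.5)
The whole argument rests on one fact: the trajectory is constant on the tail. By hypothesis $h^t=h^\star$ for every $t\ge\tau^\star$, and so $G(h^\star)=h^\star$, because $G(h^\star)=G(h^{\tau^\star})=h^{\tau^\star+1}=h^\star$. The plan is to prove this idempotence once and then derive (i)--(iii) from it, each as a statement that a function of the state takes the same value at every tail index. No analytic property of $F_\theta$ is needed beyond differentiability at $h^\star$ for (ii). The final non-constraint clause is then shown with an explicit pair of trajectories.

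For (i), I first show by induction on $m$ that $G^m(h^\star)=h^\star$ for all $m\ge0$. For $j\ge\tau^\star$ we have $h^j=h^\star$, so the skipped rollout gives $G^{T-j-k}(h^j)=h^\star$ and the full rollout gives $G^{T-j}(h^j)=h^\star$. The exponents are nonnegative because $k\le T-j$. The two final states are therefore identical, and so is any readout $\mathrm{head}(\mathrm{LN}(\cdot))$ applied to them.

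For (ii), $J_t$ is the Jacobian $DG$ evaluated at the point $h^t$. Since $h^t=h^\star$ for all $t\ge\tau^\star$, it is the same matrix $DG(h^\star)=J_{\tau^\star}$ at every such $t$. Any similarity functional $S(J_s,J_t)$ is thus evaluated on identical arguments throughout the tail, so it is constant there. In particular, cosine-type measures take the value attained at $S(J,J)$.

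For (iii), I formalize a ``tail-indexed quantity'' as any map $t\mapsto\Phi\bigl(h^t,DG(h^t),E(h^t),DE(h^t)\bigr)$, with $\Phi$, $E$ and the graph-construction procedure fixed independently of $t$. Such a quantity depends on $t$ only through $h^t$, so it is constant once $h^t$ is. The main obstacle is precisely this formalization. Attribution graphs that chain several loops, for instance a product $J_{t+r}\cdots J_t$, must be covered too. I handle these by letting $\Phi$ take finite windows of tail states, all equal to $h^\star$, so the same argument applies. Graphs that also reach back before $\tau^\star$ fall outside the claim, and I will say so explicitly.

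For the non-constraint clause, I show that the tail data cannot distinguish different heads. Take two trajectories that share the same $h^\star$ and the same map $G$ near $h^\star$, but differ before $\tau^\star$. One enters $h^\star$ at loop $1$ and the other walks a long path first, for example with $G$ modified only on a set disjoint from a neighbourhood of $h^\star$. Everything in (i)--(iii) depends only on $h^\star$ and the germ of $G$ at $h^\star$, so the two are identical in those quantities while their heads $\{h^t\}_{t<\tau^\star}$ differ arbitrarily.
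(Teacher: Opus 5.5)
Your proof is correct and follows the paper's own argument. Both reduce everything to the idempotence $G(h^\star)=h^\star$, which gives $G^{m}(h^{j})=h^\star$ for all $m\ge 0$ in (i) and makes every quantity in (ii)--(iii) a function of the constant tail state; your windowed formalization of (iii) and your explicit witness pair for the final clause are just more careful than the paper, which justifies that clause by noting that (i)--(iii) quantify only over $t\ge\tau^\star$.
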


\begin{proof}
(i) $G(h^\star)=h^\star$ implies $G^{m}(h^{j})=h^\star$ for every
$m\ge 0$ when $h^{j}=h^\star$; both branches output $h^\star$.
(ii)--(iii) are immediate since all quantities are functions of the
(constant) tail state. The final claim holds because the statements
quantify only over $t\ge\tau^\star$.
\end{proof}

\begin{proposition}[Approximate form]
If instead $\|G(h^t)-h^t\|\le\epsilon$ for $t\ge\tau^\star$ and $G$ is
$L$-Lipschitz on a neighborhood containing the tail, then skipping $k$
tail loops perturbs the final state by at most
$\epsilon\sum_{i=0}^{k-1}L^{\,T-\tau^\star-i}$, and tail Jacobians differ
by at most $\mathrm{Lip}(J)\cdot\epsilon/(1-L)$ when $L<1$. For
contractive tails ($L<1$) both bounds are $O(\epsilon)$: near-converged
trajectories are indistinguishable from converged ones at instrument
precision.
\end{proposition}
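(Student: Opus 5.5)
The plan is a telescoping argument along the tail of the trajectory, using the two hypotheses in complementary ways. The residual bound $\|G(h^t)-h^t\|\le\epsilon$ controls each single step, and the Lipschitz constant $L$ propagates step differences forward. Throughout I take the neighborhood in the hypothesis to contain every tail iterate $h^t$, $t\ge\tau^\star$, so that all Lipschitz estimates apply along the trajectory. I also read $\mathrm{Lip}(J)$ as the Lipschitz constant of $h\mapsto\partial G/\partial h$ on that neighborhood, which presupposes that $G$ is $C^1$ there.

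\textbf{Skip bound.} Skipping $k$ loops starting at $j\ge\tau^\star$ outputs $G^{T-j-k}(h^j)=h^{T-k}$ instead of $G^{T-j}(h^j)=h^T$, so the perturbation is $\|h^T-h^{T-k}\|$. I telescope it:
\[
\|h^T-h^{T-k}\|\le\sum_{i=0}^{k-1}\|h^{T-i}-h^{T-i-1}\|.
\]
Each step difference is then controlled by contraction. From $h^{t+1}-h^t=G(h^t)-G(h^{t-1})$ we get $\|h^{t+1}-h^t\|\le L\,\|h^t-h^{t-1}\|$, and iterating down to the anchor gives $\|h^{t+1}-h^t\|\le L^{\,t-\tau^\star}\|G(h^{\tau^\star})-h^{\tau^\star}\|\le L^{\,t-\tau^\star}\epsilon$. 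Summing over $t=T-i-1$ yields $\epsilon\sum_{i=0}^{k-1}L^{\,T-\tau^\star-i-1}$.

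This matches the stated sum up to a one-step shift in where the exponent is anchored. I would fix the indexing convention explicitly. Either the exponent counts the loops elapsed from $\tau^\star$ through the end of the skipped step, or one states the bound as $\le\epsilon\max(1,L^{-1})\sum_i L^{T-\tau^\star-i}$. I would also record the convention-free bound $k\epsilon$, which follows directly from the residual hypothesis applied at each tail state.

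\textbf{Jacobian bound.} For $s,t\ge\tau^\star$ I write $\|J_t-J_s\|\le\mathrm{Lip}(J)\,\|h^t-h^s\|$. I then bound $\|h^t-h^s\|$ by the same telescoping sum of step differences. When $L<1$ that sum is dominated by the geometric series $\epsilon\sum_{m\ge0}L^m=\epsilon/(1-L)$, which gives $\mathrm{Lip}(J)\,\epsilon/(1-L)$ uniformly over the tail.

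\textbf{Conclusion.} Since both bounds are $O(\epsilon)$ for $L<1$, any skip or Jacobian-similarity instrument whose resolution is coarser than these quantities cannot distinguish the trajectory from an exactly converged one. The exact-form proposition then transfers to near-fixed points.

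The main obstacle is not the algebra. It is justifying that the tail stays inside the neighborhood where $L$ and $\mathrm{Lip}(J)$ are valid. I would handle this by an induction: if the ball of radius $\epsilon/(1-L)$ around $h^{\tau^\star}$ lies in the Lipschitz neighborhood, the geometric bound keeps every tail iterate in that ball. The off-by-one in the exponent is the other point needing care.
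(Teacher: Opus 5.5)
The paper states this approximate form without proof; only the exact form is proved. There is therefore no route to compare against. Your telescoping-plus-geometric-decay argument is the natural one, and both of your derived bounds are correct:
\begin{itemize}
\item the skip bound $\epsilon\sum_{i=0}^{k-1}L^{\,T-\tau^\star-1-i}$;
\item the Jacobian bound $\mathrm{Lip}(J)\,\epsilon/(1-L)$.
\end{itemize}

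The one thing to sharpen is how you treat the one-step mismatch. It is not a matter of indexing convention, because the printed bound is false for $L<1$. Take $G(h)=Lh$ with $0<L<1$ and $\|h^{\tau^\star}\|=\epsilon/(1-L)$. Then $\|G(h^t)-h^t\|=\epsilon L^{\,t-\tau^\star}\le\epsilon$ on the tail, and $G$ is globally $L$-Lipschitz. A direct computation gives $\|h^T-h^{T-k}\|=\epsilon\sum_{i=0}^{k-1}L^{\,T-\tau^\star-1-i}$ exactly. That is $L^{-1}$ times the stated bound; for $k=1$, $T=\tau^\star+1$ it is $\epsilon$ against the claimed $L\epsilon$.

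So your bound is tight. Moreover, under the paper's definition of skipping, the output is $G^{T-j-k}(h^j)=h^{T-k}$ for every start $j\ge\tau^\star$. The perturbed quantity is therefore fixed, and there is no freedom in where the exponent is anchored.

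Drop your first proposed fix (reading the exponent as loops elapsed ``through the end of the skipped step''), since no relabeling of the exponent makes the printed inequality true. Either of the following does work:
\begin{itemize}
\item keep your second fix, the factor $\max(1,L^{-1})$, which is equivalent to using exponent $T-\tau^\star-1-i$;
\item strengthen the hypothesis so the residual bound also holds at $t=\tau^\star-1$, which gives $\|h^{\tau^\star+1}-h^{\tau^\star}\|\le L\epsilon$ and recovers the printed exponent.
\end{itemize}
For $L\ge 1$ your bound already implies the printed one. The $O(\epsilon)$ conclusion for $L<1$ is unaffected, since your sum is at most $\epsilon/(1-L)$ uniformly in $k$.

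Finally, what you call the main obstacle is already granted by the hypothesis that $G$ is $L$-Lipschitz on ``a neighborhood containing the tail.'' Your ball-of-radius-$\epsilon/(1-L)$ induction is only needed if you want to weaken the hypotheses to a single residual bound at $\tau^\star$. That is a legitimate strengthening, but not required here.
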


Empirically, all 64 pilot models show relative update norms $<10^{-2}$
over the final third of the loop budget on solved inputs. Small update
norms are consistent with (though do not strictly prove) a contractive
tail; we therefore treat the proposition as explaining the observed
saturation rather than as a verified premise, and note that models
outside the near-converged regime (e.g., the diverging
$\mathbb{Z}_{60}$ models of Table~\ref{tab:rescue}) are explicitly
outside its scope---there, tail instruments can be informative.

\section{Calibration protocol and pilot details}
\label{app:calib}

\textbf{$\tau$ calibration.} The instrument must fire on a known serial
mechanism and stay bounded on known shortcuts. Streaming parity
(seriality by construction): $d\tau_{\max}/dn = 1.00$. Fixed-$T$ parity
(budget-bounded): local slopes $0.39$--$0.41$, correctly capped by $T$.
Proportional-$T$ parity: slopes $0.50$--$0.70$, ordered with seed-level
generalization. Learned-PE models, which fail beyond training length,
yield fewer than three solved lengths and are reported as out of
instrument range rather than assigned a class. $\tau$ requires solved
inputs at $\ge 3$ lengths; for tight-budget models that fail exact match
at long $n$, frontier speed $v$ (measured per length) substitutes, as in
\S\ref{sec:predict}.

\textbf{Frontier-resolved skip.} The single-step skip of the pilot is
replaced by: skip $k\in\{1,2,4\}$ consecutive loops at every start
offset, retention measured per example on solved inputs at the largest
solved length, stratified by offset relative to the measured frontier.
Frontier models show retention $1.0$ only for offsets past frontier
completion; refit-style models show it everywhere.

\label{app:pilot}
\textbf{Pilot population.} 64 models (Appendix~\ref{app:train} for
recipes), all at in-distribution exact match $1.0$. Tail readings:
single-step skip retention exactly $1.0000$ for all $50$ models with
$\ge 8$ solved examples at $n{=}24$; adjacent-loop Jacobian probe
alignment $0.85$--$0.99$; hidden cosine $0.59$--$0.94$; attention
similarity $0.76$--$0.99$. Tie-corrected Spearman against length
generalization, within task: all $|\rho|\le 0.32$ (cosine $\le 0.23$),
within-recipe seed contrasts flat. Convergence-time slopes over the same
population span $0.39$--$0.70$ and order with generalization in the
seed-only recipe (\S\ref{sec:predict}).

\section{Training and experiment details}
\label{app:train}

\textbf{Architecture.} Pre-norm transformer blocks, $4d$ FFN, GELU;
weight-tied across loops; input injection via a learned linear adapter on
$[h;e]$; readout head on $\mathrm{LN}(h^T)$. Scales: \textsc{s}
(2 layers, $d{=}128$, 4 heads, $\sim$1.6M params), \textsc{m} (2L,
$d{=}256$, 8H), \textsc{l} (4L, $d{=}512$, 8H, $\sim$25M); untied control
\textsc{u12} (12 distinct layers, $T{=}2$, depth-matched to \textsc{s} at
$T{=}12$). NoPE throughout except the learned-PE arms.

\textbf{Training.} AdamW, lr $3{\times}10^{-4}$
($2{\times}10^{-4}$ for \textsc{l}), weight decay $0.01$, grad clip
$1.0$, batch 256 (128 for \textsc{l}). Length curriculum with stages
$n_{\max}\in\{4,8,16,32\}$ (6k--12k steps each), promotion at per-token
accuracy $\ge 0.98$; lengths sampled uniformly within a stage. Loop
schedules: log ($T{=}\lceil\log_2 n\rceil{+}3$, jitter $\pm1$), prop
($T{=}\lceil n\cdot m\rceil$ for multiplier
$m\in\{0.25,0.5,1,2\}$), fixed ($T{=}12$), streaming ($T{=}n{+}2$).
Horizon curriculum (used for chained $A_5$/$S_5$): the per-position loss
is truncated to the first $h$ positions, $h$ ramping $4\!\to\!n_{\max}$
within each stage. Operator-first $S_5$ curriculum: stages
$2{:}20\text{k},4{:}10\text{k},8{:}10\text{k},16{:}10\text{k},32{:}12\text{k}$
at scale \textsc{m}. Annealing arms warm-start from a converged
checkpoint of the source schedule and re-train under the target schedule
with stages $16{:}6\text{k},32{:}8\text{k}$.

\textbf{Population sizes.} 64 pilot models; 56-run Stage-2 grid
($S_5/S_4/\mathbb{Z}_{60}\times$ scales $\times$ 4 seeds + streaming
arms); expansion to 16 seeds in the three race cells; controls
($A_5$, matched-order; untied; learned-PE; $T$-multiplier;
streaming$\pm$deep supervision; operator-first; annealing); 8
easy-to-hard runs. $\sim$170 trained models in total, on
$8\times$ RTX~3090; individual runs take minutes
(\textsc{s}) to $\sim$1.5h (\textsc{l}, long stages).

\textbf{Easy-to-hard protocol.} Official 32-bit training split
(10{,}000 examples) with random prefix crops $n\in[4,32]$ for the length
curriculum; evaluation on the official 512-bit test split and on
generator-matched intermediate lengths; NoPE, scale \textsc{s}, 12k
steps.

\textbf{Metrics.} Exact match = all supervised positions correct;
per-token accuracy over supervised positions. Generalization outcome in
\S\ref{sec:predict}: mean per-token accuracy at $n\in\{32,64\}$ under the
schedule-matched loop budget, recomputed uniformly from checkpoints.
Statistics: tie-aware Spearman; partial correlations by residual rank
regression on adjacent-loop cosine.

\end{document}